\colorlet{}{yellow}
\theoremstyle{remark}
\newtheorem{theorem}{\textbf{Theorem}}
\newtheorem{proposition}{\textbf{Proposition}}
\newtheorem{definition}{\textbf{Definition}}
\begin{document}
\bstctlcite{}
    \title{Minimize Control Inputs for Strong Structural Controllability Using Reinforcement Learning with Directed Graph Neural Network}
  \author{Mengbang Zou, Weisi Guo$^*$, Bailu Jin\\

\thanks{M. Zou is with Cranfield University, Cranfield, MK43 0AL, U.K. (e-mail:m.zou@cranfield.ac.uk)}
\thanks{W. Guo (corresponding author) is with Cranfield University, Cranfield MK43 0AL, U.K., and also with the Alan Turing
Institute, London, NW1 2DB, U.K. (e-mail: weisi.guo@cranfield.ac.uk).} 
\thanks{B. Jin is with Cranfield University, Cranfield, MK43 0AL, U.K. (e-mail:bailu.jin@cranfield.ac.uk)}}

\maketitle

\begin{abstract}
Strong structural controllability (SSC) guarantees networked system with linear-invariant dynamics controllable for all numerical realizations of parameters. Current research has established algebraic and graph-theoretic conditions of SSC for zero/nonzero or zero/nonzero/arbitrary structure. One relevant practical problem is how to fully control the system with the minimal number of input signals and identify which nodes must be imposed signals. Previous work shows that this optimization problem is NP-hard and it is difficult to find the solution. To solve this problem, we formulate the graph coloring process as a Markov decision process (MDP) according to the graph-theoretical condition of SSC for both zero/nonzero and zero/nonzero/arbitrary structure. We use Actor-critic method with Directed graph neural network which represents the color information of graph to optimize MDP. Our method is validated in a social influence network with real data and different complex network models.  We find that the number of input nodes is determined by the average degree of the network and the input nodes tend to select nodes with low in-degree and avoid high-degree nodes.

\end{abstract}

\begin{IEEEkeywords}
controllability; reinforcement learning; complex network; directed graph neural network
\end{IEEEkeywords}

\IEEEpeerreviewmaketitle

\section{Introduction}
\IEEEPARstart{C}{ontrollability} of complex networks with linear time-invariant (LTI) dynamics as an essential property to achieve a desired global behavior with a suitable choice of inputs has attracted a lot of attention in recent years.  The controllability can be verified by the Kalman rank condition \cite{kalman1963mathematical}. However, for many complex networks, the system parameters are not precisely known. We only know whether there exists a link or not, but are not able to measure the weights of the links. Hence, it is difficult to numerically verify Kalman's controllability rank condition. Besides, even if all weights are known, Kalman's controllability rank condition is quite sensitive to the weights in a large complex network. Even slight perturbations in the weights can cause the controllability of the complex network totally different. To bypass the need of exact value of system parameters,  \cite{liu2011controllability} proposed a method by combining tools from control theory and network science to analyze the structure controllability of complex networks, where elements in system matrix are either fixed zeros or independent free parameters. This weak structural controllable system can be shown to be controllable
for almost all weight combinations, except for some pathological cases \cite{liu2011controllability}. However, some systems have interdependent parameters, making them uncontrollable despite the fact that it is structurally controllable. This leads to the notion of strong structural controllability (SSC): A system is strongly structurally controllable if it remains controllable for any value \cite{mayeda1979strong}. An algebraic condition for strong structural controllability was provided in \cite{reinschke1992strong}. \cite{trefois2015zero} provided a necessary and sufficient condition in terms of zero forcing sets for strong controllability. Most of the existing literature has considered strong structural controllablity under the assumption that elements in system matrix are either a fixed zero or an arbitrary nonzero value. However, in many scenarios, a third possibility exists, in which a given element is not a fixed zero or nonzero, but can take any real value. In such a scenario, it is not possible to describe the system by a zero/nonzero structure. To solve this problem, \cite{jia2020unifying} extended the zero/nonzero structure to a more general zero/nonzero/arbitrary structure and has established necessary and sufficient conditions for strong structural controllability under this zero/nonzero/arbitrary structure. 

Any networked system with LTI dynamics is fully controllable if independent signals are imposed on each node individually. But it is costly and impractical for large complex systems. Therefore, how to fully control the whole system with a minimum number of input signals is of interest. Minimum input for weak structural controllability is well-studied in \cite{olshevsky2015minimum, pequito2015framework, moothedath2018flow}. For minimum input problems of strong structural controllability, \cite{chapman2013strong} proved this problem is NP-complete and proposed a heuristic algorithm for small networks. \cite{pequito2013framework} proposed an algorithm to solve minimal SSC problems if the structured state matrix has a so-called maximal staircase structure. \cite{trefois2015zero} proved that the minimal SSC problem of a directed graph allowing loops is NP-hard and proposed a method to find the minimum size input set of SSC of a self-damped system with a tree structure. \cite{yashashwi2019minimizing} proposed a randomized algorithm based on Markov Chain Monte Carlo to find the minimum input set for general graph topology. All of these researches are based on the zero/nonzero structure. But the research on the minimum input problem of SSC for a more general zero/nonzero/arbitrary structure still remains elusive. 

\begin{figure*}[ht]
    \centering
    \resizebox*{15cm}{!}{\includegraphics{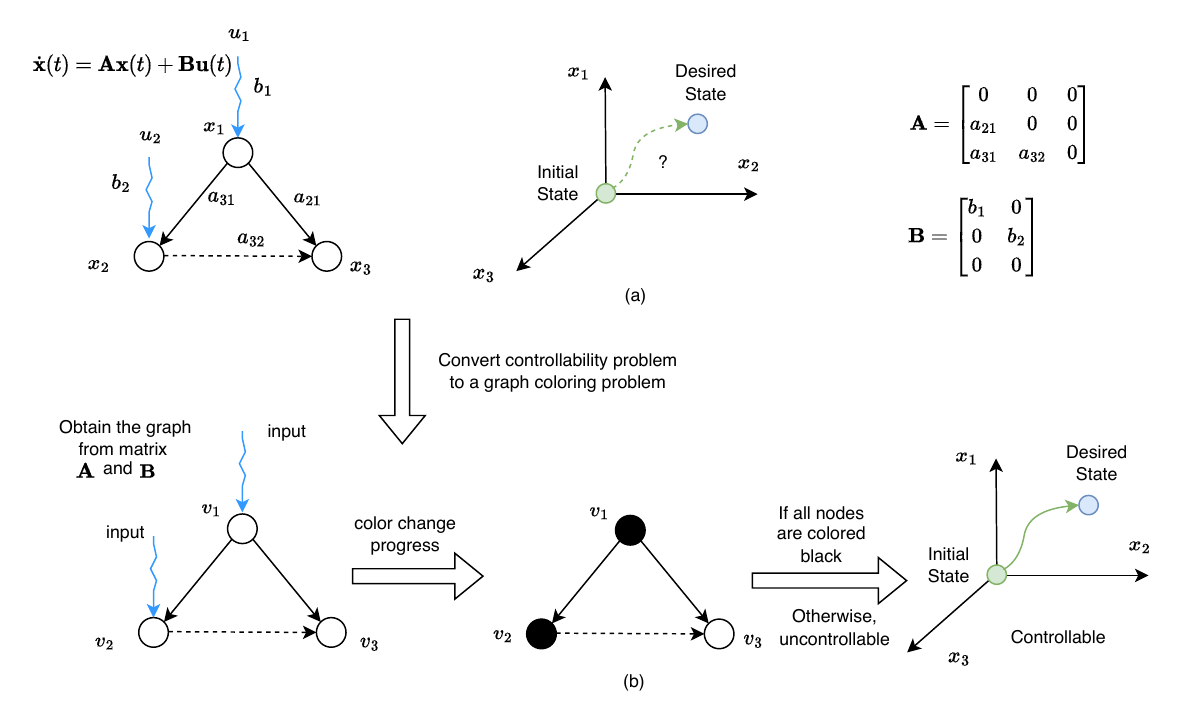}}
    \caption{This figure shows how to convert the problem of controllability to a graph coloring problem. (a) The networked system is controlled by input signals $u_1$ and $u_2$, allowing the system to achieve a desired state from the initial state. Matrix $\textbf{A}$ describes the connections among nodes. Matrix $\textbf{B}$ shows nodes imposed signals. (b) Whether the system is fully controlled by the imposed signals $u_1, u_2$ can be converted to a graph coloring problem. Nodes $v_1, v_2, v_3$ in (b) represent variables $x_1, x_2, x_3$ in (a). If all nodes of the graph obtained by matrix $\textbf{A}$ can be colored black according to the color change rule in Definition~(\ref{def: color}), then the system is controllable. Otherwise, the system is not controllable.}
    \label{fig:1}
\end{figure*}

The necessary and sufficient graph-theoretic condition for SSC with zero/nonzero/arbitrary structure has been established in \cite{jia2020unifying} and they have proved that the system is fully controllable if and only if the corresponding graph is colorable by a specific color change rule. Therefore, it is reasonable for us to convert the minimum control input problem of SSC to a graph coloring problem. By doing this, we can find minimum control input for SSC by coloring all nodes with the minimum number of input nodes. We can design a simple algorithm based on the color change rule and degree distribution of nodes to find the minimum number of input nodes. This method is effective in small networks.

Recently, reinforcement learning (RL) methods have been applied in combinatorial optimization on graphs, e.g. traveling salesman problem \cite{bello2016neural, nazari2018reinforcement, cappart2021combining}, maximum cut problem \cite{khalil2017learning, barrett2020exploratory, gu2020deep}, bin packing problem \cite{duan2019multi, laterre2018ranked, que2023solving}, minimum vertex cover problem \cite{song2020co, manchanda2020gcomb}, etc. Drawing inspiration from the successful application of RL in the graph optimization problem, we explore the potential of RL as a solution for the minimum control input problem of SSC. To solve the minimum control input problem of SSC, we transform it into a graph coloring optimization problem and formulate the process of graph coloring as a Markov Decision Process(MDP). Initially, all nodes are colored white, then the subsequent action is coloring a selected node into black. The introduction of the black node will force some white nodes to transition to black under specific color change rules, thereby advancing the state. The reward function is designed according to the number of black nodes in each state. It is important to note that the action and the state are closely related to the structure of the graph. However, neural networks in reinforcement learning methods such as deep Q learning and actor-critic networks are difficult to apply on the graph-based data with large discrete action space. Thus, we need a method which can represent the graph structure and generalize across similar states as well as actions. It is natural for us to consider using directed graph neural networks, which contains graph structure information, as actor-network and critic network.

The contribution of this paper is as followings. Firstly, we propose a framework to optimize the number of control inputs to make the system strong structural controllable for zero/nonzero/arbitrary structure as well as zero/nonzero structure. In order to solve the problem of minimizing inputs for SSC, we transform it into a graph coloring problem according to a specific color change rule. Consequently, the corresponding MDP is defined. To further optimize the MDP, we incorporate a reinforcement learning method with directed graph neural networks. Secondly, we design an algorithm based on the aforementioned color change rule and degree distribution of the network. This approach facilitates the identification of the least number of control inputs, proving especially effective in small networks. Thirdly, we extend our method to various complex network models to explore the relationship between network topology and the minimum number of control inputs for SSC. Our research reveals that the minimum number of control inputs is determined by the average degree of the network and the input nodes tend to select nodes with low in-degree and avoid high-degree nodes.

\begin{figure*}[ht]
    \centering
    \resizebox*{16cm}{!}{\includegraphics{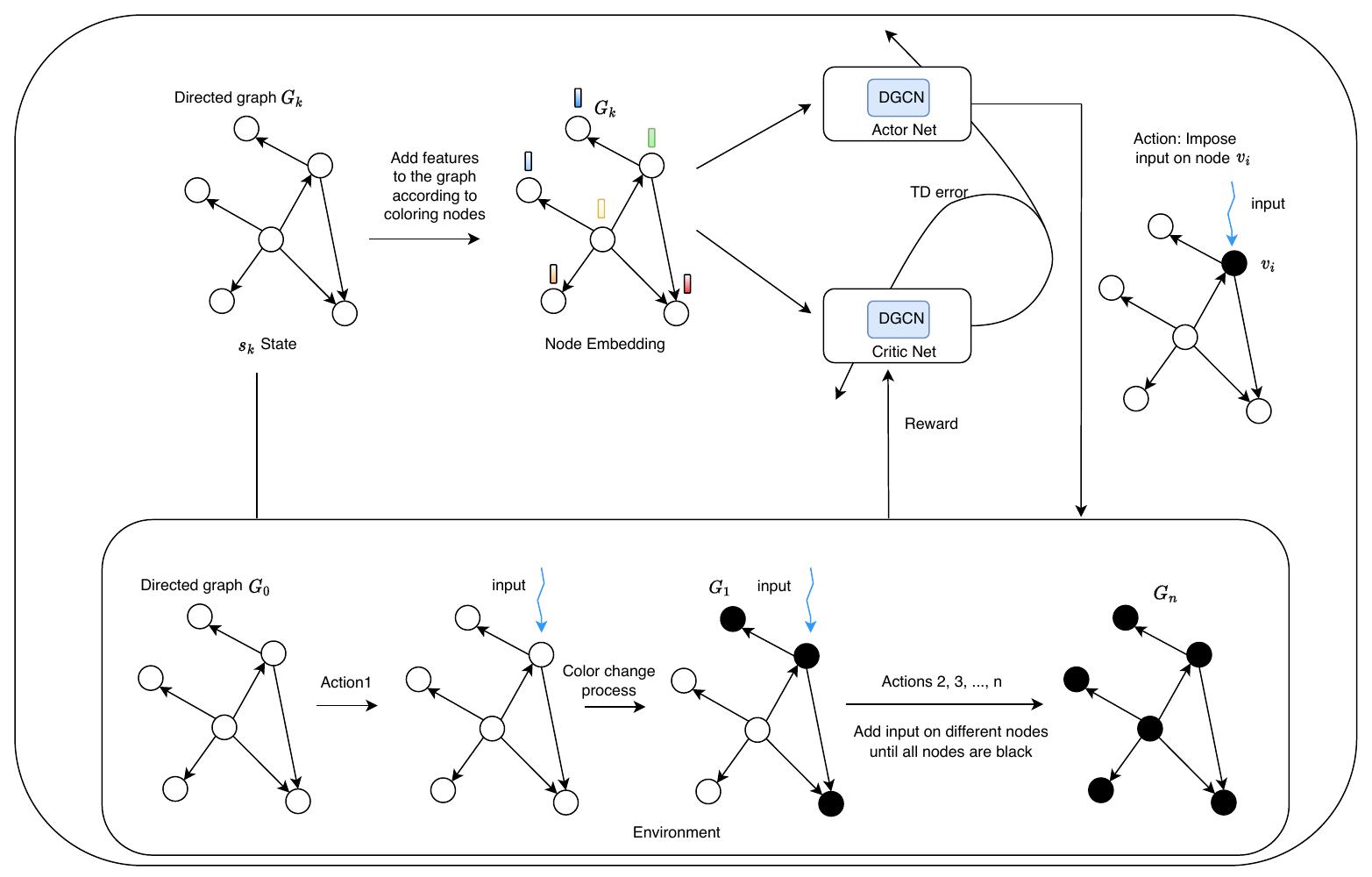}}
    \caption{This figure shows how to use reinforcement learning with directed graph neural networks to minimize the number of inputs to make the system controllable. The state represents the color of all nodes. The feature vector of each node is obtained by the color of nodes. Input the graph with feature vectors to the critic net to value the current state. Input the graph with feature vectors to the actor net to generate an action to color a node black. Coloring the graph according to the specific color change rule to get the next state. Repeat these steps until all nodes are colored black.  Calculate the reward and state values to update parameters of Actor Nets and Critic Net.}
    \label{fig:2}
\end{figure*}

\section{Methods}
\subsection{Graph basics}
We consider a complex network represented by a simple directed graph $G(\textbf{A}, \textbf{B})=(\textbf{V}, \textbf{E})$ where \textbf{A} is the connection matrix, \textbf{B} is the input matrix, $\textbf{V}$ is the node set and $\textbf{E}$ is the edge set respectively. $\textbf{V}=\{v_1, v_2, \cdots, v_N\}$ and $\textbf{E} \subseteq \textbf{V} \times \textbf{V}$. $\textbf{A} \in \{0, *, ?\}^{N \times N}$, where $0$ represents fixed zero, $*$ represents nonzero and $?$ represents any arbitrary value. $e_{j, i} \in \textbf{E}$ if and only if $A_{ij} = *$ or $A_{ij}=?$. If $e_{i, j} \in \textbf{E}$, node $v_j$ is the out-neighbour of node $v_i$. To distinguish between $*$ and $?$ entries in $\textbf{A}$, two subsets of $\textbf{E}$ are defined as $\textbf{E}_{*}$ and $\textbf{E}_{?}$. $e_{j, i} \in \textbf{E}_{*}$ if and only if $A_{ij}=*$. $e_{j, i} \in \textbf{E}_{?}$ if and only if $A_{ij}=?$. Edges in $\textbf{E}_{*}$ and $\textbf{E}_{?}$ are represented by solid and dash arrows respectively in visualization.

\subsection{System model}

Consider a complex system described by a directed weighted network of $N$ nodes, the dynamics of a linear time-invariant (LTI) system can be described as 
\begin{equation}
    \dot{\textbf{x}}(t)=\textbf{Ax}(t)+\textbf{Bu}(t),
\end{equation}
where $\textbf{x}(t)=(x_1(t),x_2(t), \cdots x_N(t))^{\top} \in \mathbb{R}^N$ captures the state of each node at time $t$. $\textbf{A}\in \mathbb{R}^{N \times N}$ is an $N \times N$ matrix describing the weighted connection of the network. The matrix element $a_{ij} \in \mathbb{R}$ gives the strength that node $j$ affects node $i$. $\textbf{B}\in \mathbb{R}^{N \times M}$ is an $N \times M$ input matrix ($M \le N$) identifying the nodes that are controlled by the time-dependent input vector $\textbf{u}(t) = (u_1(t), u_2(t), \cdots, u_M(t)) \in \mathbb{R}^{M}$ with $M$ independent signals imposed by the controller. The matrix element $b_{ij} \in \mathbb{R}$ represents the coupling strength between the input signal $u_j(t)$ and node $i$. In this paper, nodes directly controlled by input signals are called input nodes. Let $\textbf{V}_l = \{l_1, l_2, \cdots, l_M\} \subseteq \textbf{V}$ be the set of input nodes, then
\begin{equation}
    B_{ij}:=\left\{
    \begin{aligned}
        &1 \quad {\rm if} \quad v_i =l_j\\
        &0 \quad {\rm otherwise}. 
    \end{aligned}
    \right.
\end{equation}

Kalman's rank condition states that the LTI system is controllable if and only if the $N \times NM$ controllability matrix
\begin{equation}
    \textbf{C} \equiv [\textbf{B}, \textbf{AB}, \textbf{A}^2\textbf{B},\cdots,\textbf{A}^{N-1}\textbf{B}]
\end{equation}
has full rank, i.e.,
\begin{equation}
    {\rm{rank}} ~\textbf{C} = N.
\end{equation}

\begin{definition}
    A structural LTI system $\textbf{A}, \textbf{B}$ is called strong structural controllable (SSC) if it is controllable for all its numerical realizations $(\Tilde{\textbf{A}}, \Tilde{\textbf{B}})$.
\end{definition}

In this paper, the entries in $\textbf{A}$ are denoted by ${0, *, ?}$. The system $(\textbf{A}, \textbf{B})$ is SSC if the system satisfies ${\rm rank}~\textbf{C} =N$ for any admissible numerical realizations with zero/nonzero/arbitrary structure. To establish the necessary and sufficient conditions for SSC, we need to introduce the color change rule and the definition of derived set.

\begin{definition}\label{def: color}
    (Color change rule for zero/nonzero/arbitrary structure). Given a graph $G(\textbf{A}, \textbf{B})=(\textbf{V}, \textbf{E})$ where each node is initially colored either white or black, the color change rule is defined as follow: if node $v_i$ has exactly one white out-neighbour node $v_j$ and $e_{i,j} \in \textbf{E}_*$, then node $v_j$ is changed to black.
\end{definition}
If node $v_j$ changes color to black because of node $v_i$, then we say that $v_i$ forces $v_j$ to be black and is denoted  by $v_i \to v_j$. This color change rule is different from that in \cite{trefois2015zero, yaziciouglu2022strong}, where we consider $\textbf{E}_{?}$ exists in the graph. 

\begin{definition}
    (Derived Set). Given an initial set of black nodes $\textbf{V}_l \subseteq \textbf{V}$ (called the input set) in graph $G(\textbf{A}, \textbf{B})=(\textbf{V}, \textbf{E})$ and $G^{*}(\bar{\textbf{A}}, \textbf{B}) = (\textbf{V}, \textbf{E}')$, where $\bar{\textbf{A}}$ is the pattern matrix obtained from $\textbf{A}$ by modifying the diagonal entries of $\textbf{A}$ as follows:
\begin{equation}\label{equ: A}
    \bar{A}_{ii}:=\left\{
    \begin{aligned}
        * \quad & \quad {\rm if} \quad A_{ii} = 0 \\
        ? \quad & \quad {\rm otherwise,}
    \end{aligned}
    \right.
\end{equation} 
repeat the color change rule until no more white nodes can be colored to black. The set of all black nodes exists in $G(\textbf{A}, \textbf{B})$ is defined as derived set by ${\rm dset}(G, \textbf{V}_l) \subseteq \textbf{V}$. An input set $\textbf{V}_l$ is called a zero forcing set (ZFS) if ${\rm dset}(G, \textbf{V}_l) = {\rm dset}(G^{*}, \textbf{V}_l) = \textbf{V}$.  
\end{definition}

\begin{figure}[htbp]
    \centering
    \resizebox*{8cm}{!}{\includegraphics{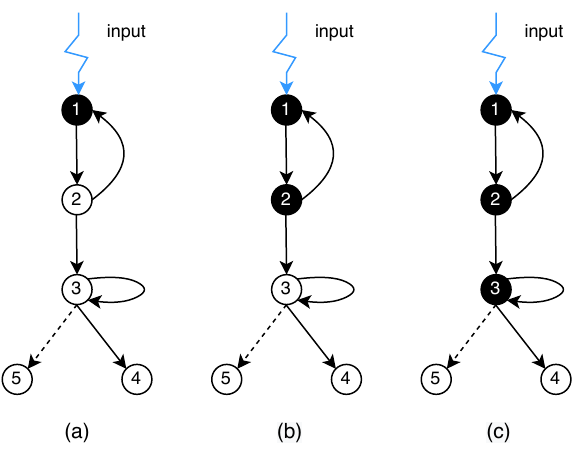}}
    \caption{This figure shows the color-change rules. (a) Node 1 is the input node which is colored black at first. (b) node 1 colors 2. (c) node 2 colors node 3.}
    \label{fig: color1}
\end{figure}

\begin{figure}[htbp]
    \centering
    \resizebox*{6cm}{!}{\includegraphics{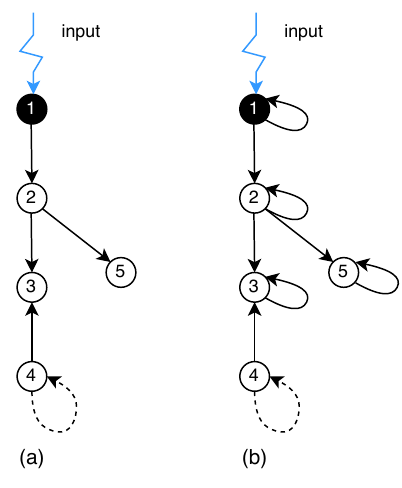}}
    \caption{(a) is the original graph $G(\textbf{A}, \textbf{B})$ and (b) is $G(\bar{\textbf{A}}, \textbf{B})$. The input node is $v_1$. According to the color change rule, the set of all black nodes in (a) and (b) is $(1, 2)$ and $(1, 2, 3, 5)$, respectively. The derived set ${\rm dset}(G, (1))$ is $(1, 2)$.}
    \label{fig: modify_graph}
\end{figure}

\begin{definition}\label{def: 4}
    (Minimum input nodes for SSC). The minimum input nodes for SSC is equal to the minimum size of zero forcing set of graph $G$ is defined as 
    \begin{equation}
        Z(G) = {\rm min}\{|\textbf{V}_l| \big| \textbf{V}_l \subseteq \textbf{V}, {\rm dset}(G, \textbf{V}_l)={\rm dset}(G^{*}, \textbf{V}_l) = \textbf{V} \}  
    \end{equation}    
\end{definition}

\begin{theorem}(proved in \cite{jia2020unifying})\label{theorem: 1}
    The system $\textbf{A}, \textbf{B}$ is controllable if and only if the following two conditions hold.
    
\noindent 1) Matrix $[\textbf{A}, \textbf{B}]$ has full row rank for all admissible numerical realizations.

\noindent 2) Matrix $[\bar{\textbf{A}}, \textbf{B}]$ has full row rank for all admissible numerical realizations, where $\bar{\textbf{A}}$ is obtained from $\textbf{A}$ according to equation~(\ref{equ: A}).
\end{theorem}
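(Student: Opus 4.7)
The plan is to invoke the Popov-Belevitch-Hautus (PBH) eigenvalue criterion: a numerical pair $(\tilde{\textbf{A}}, \tilde{\textbf{B}})$ is controllable if and only if the $N \times (N+M)$ matrix $[\tilde{\textbf{A}} - \lambda I,\, \tilde{\textbf{B}}]$ has full row rank for every $\lambda \in \mathbb{C}$. Strong structural controllability of $(\textbf{A}, \textbf{B})$ therefore amounts to this rank condition holding simultaneously for all admissible realizations and all $\lambda$. I would split the $\lambda$-quantifier into the cases $\lambda = 0$ and $\lambda \neq 0$, and identify each with one of the two stated conditions.

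The $\lambda = 0$ case is immediate: demanding that $[\tilde{\textbf{A}},\,\tilde{\textbf{B}}]$ have full row rank across every admissible realization is exactly Condition 1. For $\lambda \neq 0$, the key observation is that $\tilde{\textbf{A}} - \lambda I$ agrees with $\tilde{\textbf{A}}$ off-diagonal and perturbs only the diagonal in a structurally predictable way. If $A_{ii} = 0$ then $(\tilde{\textbf{A}} - \lambda I)_{ii} = -\lambda \neq 0$, which matches $\bar{A}_{ii} = *$; if $A_{ii} \in \{*, ?\}$ then $\tilde{A}_{ii} - \lambda$ can take any value (including zero, when $\lambda = \tilde{A}_{ii}$), matching $\bar{A}_{ii} = ?$. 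Hence the family of matrices $\{\tilde{\textbf{A}} - \lambda I : \tilde{\textbf{A}}\ \text{admissible},\ \lambda \neq 0\}$ coincides, as a structural set, with the admissible realizations of $\bar{\textbf{A}}$. The PBH requirement at nonzero $\lambda$ therefore reduces precisely to Condition 2.

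The main subtlety I expect to confront is the mismatch between real admissible realizations and the complex eigenvalues at which PBH is tested. For the sufficiency direction, I would argue that Conditions 1 and 2 over real realizations imply the PBH condition at every complex $\lambda$, by exploiting that the $?$ pattern on the diagonal of $\bar{\textbf{A}}$ already absorbs any real diagonal shift, and that a complex obstruction to rank would pull back, via conjugation and the reality of $\tilde{\textbf{A}}$, to a real one that Condition 2 already rules out. For the necessity direction, I would check that each admissible realization of $\bar{\textbf{A}}$ can be written as $\tilde{\textbf{A}} - \lambda I$ for some admissible $\tilde{\textbf{A}}$ and some nonzero real $\lambda$, which is a routine diagonal book-keeping over the three entry types $\{0,*,?\}$. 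Putting the two halves together gives the claimed equivalence and completes the proof.
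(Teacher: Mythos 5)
Your overall strategy is the right one: note that the paper itself does not prove Theorem~\ref{theorem: 1} but defers it to \cite{jia2020unifying}, and the proof there is indeed a PBH argument split into the cases $\lambda = 0$ and $\lambda \neq 0$. However, two of your steps fail as stated. First, in the necessity direction, it is \emph{not} true that every admissible realization of $[\bar{\textbf{A}}, \textbf{B}]$ can be written as $[\tilde{\textbf{A}} - \lambda I, \tilde{\textbf{B}}]$, so your claimed coincidence of the two families "as structural sets" is false in one inclusion. If $A_{ii} = A_{kk} = 0$ for two indices $i \neq k$, then $\bar{A}_{ii} = \bar{A}_{kk} = *$ are \emph{independent} nonzero entries, and a realization $\tilde{\bar{\textbf{A}}}$ with $\tilde{\bar{A}}_{ii} \neq \tilde{\bar{A}}_{kk}$ would force the single scalar $\lambda$ to equal both $-\tilde{\bar{A}}_{ii}$ and $-\tilde{\bar{A}}_{kk}$, a contradiction; even with one diagonal zero, the forced $\lambda$ may collide with $-\tilde{\bar{A}}_{jj}$ at a position where $A_{jj} = *$, making $\tilde{A}_{jj} = 0$ inadmissible. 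This is not "routine diagonal book-keeping": the missing idea is a diagonal scaling. Choose $\lambda \neq 0$ avoiding the finitely many bad values, then right-multiply the rank-deficient realization by an invertible diagonal matrix $D$ with $d_i = -\lambda/\tilde{\bar{A}}_{ii}$ at the zero-diagonal positions and $d_i = 1$ elsewhere; this preserves rank deficiency and the off-diagonal zero/nonzero/arbitrary pattern and normalizes the relevant diagonal entries to a common value $-\lambda$, after which your matching goes through.

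Second, the real-versus-complex bridge in the sufficiency direction does not close via conjugation. If $[\tilde{\textbf{A}} - \lambda I, \tilde{\textbf{B}}]$ drops rank at a non-real $\lambda$, conjugating only produces another \emph{complex} rank-deficient realization (at $\bar{\lambda}$); it never hands you the \emph{real} rank-deficient realization of $[\bar{\textbf{A}}, \textbf{B}]$ needed to contradict Condition~2. What is actually required is the lemma that a $\{0, *, ?\}$ pattern matrix has full row rank for all real realizations if and only if it has full row rank for all complex realizations. In \cite{jia2020unifying} this follows from a combinatorial (zero-forcing/triangularization) characterization of full row rank: when the combinatorial condition holds, the echelon-form argument — essentially the one this paper deploys in its proof of Theorem~\ref{theorem: 2} — forces full rank over any field, and when it fails, one can construct a real rank-deficient realization. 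Without that lemma (or an equivalent substitute), your sufficiency direction has a genuine hole; with it, and with the scaling fix above, your outline matches the structure of the cited proof.
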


\begin{theorem}\label{theorem: 2}
 Let $\textbf{A} \in \{0, *, ?\}^{N \times N}$.
 $\textbf{V}_l \subseteq \textbf{V}$ is the set of input nodes. The system $(\textbf{A}, \textbf{B})$ is SSC if and only if

\noindent 1) ${\rm dset}(G(\textbf{A}, \textbf{B}), \textbf{V}_l) = \textbf{V}$;

\noindent 2) ${\rm dset}(G^{*}(\bar{\textbf{A}}, \textbf{B}), \textbf{V}_l) = \textbf{V}$ where $\bar{\textbf{A}}$ is obtained from $\textbf{A}$ by modifying the diagonal entries of $\textbf{A}$ as equation (\ref{equ: A}).
 
\end{theorem}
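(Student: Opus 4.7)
The plan is to reduce Theorem~\ref{theorem: 2} to the algebraic characterization in Theorem~\ref{theorem: 1} by proving, separately for each of the two conditions, that a rank condition on a structured matrix is equivalent to the corresponding zero-forcing condition on the associated graph. Because $\bar{\textbf{A}}$ differs from $\textbf{A}$ only in its diagonal, and the color change rule in Definition~\ref{def: color} treats self-loops in the same way as off-diagonal edges, it suffices to prove one general equivalence and then apply it to $[\textbf{A},\textbf{B}]$ on $G(\textbf{A},\textbf{B})$ and to $[\bar{\textbf{A}},\textbf{B}]$ on $G^{*}(\bar{\textbf{A}},\textbf{B})$.

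The natural technique is to analyze the left null space. The matrix $[\textbf{A},\textbf{B}]$ fails to have full row rank for some admissible realization iff there is a nonzero $y \in \mathbb{R}^N$ with $y^{\top}\textbf{A}=0$ and $y^{\top}\textbf{B}=0$. The second equation pins $y_i=0$ for every $v_i \in \textbf{V}_l$, which is exactly the initial black set. The $i$-th scalar equation $\sum_j A_{ji}y_j=0$ involves precisely the out-neighbours of $v_i$, because by the paper's convention $A_{ji}\neq 0$ iff $e_{i,j}\in\textbf{E}$. The color change rule captures the one algebraic step available in general: if all but one of the $y_j$ over the out-neighbours of $v_i$ are already known to vanish and the surviving edge $e_{i,j^{*}}$ lies in $\textbf{E}_{*}$, then $A_{j^{*},i}$ is guaranteed nonzero for every realization, so the equation collapses to $A_{j^{*},i}y_{j^{*}}=0$ and forces $y_{j^{*}}=0$. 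An induction along a valid forcing sequence therefore proves the ``if'' direction: whenever ${\rm dset}(G,\textbf{V}_l)=\textbf{V}$, the only vector in the left null space is zero, regardless of the numerical realization.

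For the converse I would argue by contrapositive. Let $D={\rm dset}(G,\textbf{V}_l)$ and suppose $D\subsetneq\textbf{V}$. Set $y_j=0$ for $v_j \in D$ and treat the remaining entries as unknowns. For each node $v_i$, its defining equation $\sum_j A_{ji}y_j=0$ involves either no unknowns, or at least two unknowns, or exactly one unknown reached through a $?$-edge; the remaining possibility, exactly one unknown along a $\textbf{E}_{*}$-edge, is ruled out because otherwise the color change rule would extend $D$. Choosing every $?$-entry that touches a unique unknown to equal $0$ kills that equation's constraint on the unknowns, while equations with two or more unknowns are either automatically under-determined or can be made so by further free choices of the $?$-entries. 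The resulting linear system on $\{y_j:v_j\notin D\}$ therefore admits a nonzero solution, yielding an admissible realization for which $[\textbf{A},\textbf{B}]$ loses row rank.

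The main obstacle I anticipate is the bookkeeping in this ``only if'' construction: one must verify that the case analysis is exhaustive (which is precisely the closure property of $D$ under the color change rule of Definition~\ref{def: color}), that the chosen numerical values of the $*$- and $?$-entries assemble into a globally consistent admissible realization, and that the induced system on the unknowns is actually rank-deficient. Once this graph-rank equivalence is in hand, applying it to $(\textbf{A},\textbf{B})$ and to the modified pair $(\bar{\textbf{A}},\textbf{B})$ obtained via equation~(\ref{equ: A}), and then invoking Theorem~\ref{theorem: 1}, immediately yields Theorem~\ref{theorem: 2}.
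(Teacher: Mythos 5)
Your reduction to Theorem~\ref{theorem: 1} and your ``if'' direction are sound and essentially coincide with the paper's own argument: the paper also proves that ${\rm dset}(G,\textbf{V}_l)=\textbf{V}$ implies full row rank by an induction along the forcing sequence in the left null space (phrased with a block partition of $\textbf{A}$ and vectors $\xi,\eta$ rather than coordinate-wise, but the mechanism is identical --- $y^{\top}\textbf{B}=0$ zeroes the input coordinates, and each force $v_i \to v_j$ along an $\textbf{E}_*$ edge collapses the column-$i$ equation to $A_{ji}y_j=0$ with $A_{ji}\neq 0$ guaranteed in every realization). For the other direction the paper proceeds differently: it argues directly that full row rank for all realizations forces colorability, via a row-echelon-form argument, whereas you argue by contrapositive and construct a rank-deficient admissible realization from a failed forcing process. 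Your route is legitimate and is in fact the standard one in the zero forcing literature; it is arguably more transparent than the paper's echelon sketch, since it makes explicit which realization witnesses the rank drop.

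However, your construction has a genuine gap exactly where you flagged it. From closure of $D={\rm dset}(G,\textbf{V}_l)$ you correctly deduce that every column equation, restricted to the white unknowns, has zero unknowns, at least two unknowns, or exactly one unknown reached through a $?$ edge. But it does not follow that ``the resulting linear system on the unknowns admits a nonzero solution'': equations with two or more unknowns are each a nontrivial linear constraint, and $N$ such constraints on $|W|$ unknowns can have only the trivial solution (for instance $y_1+y_2=0$ and $y_1-y_2=0$, where both columns have two white out-neighbours through $*$ edges, perfectly consistent with closure). Nor can every such equation be ``made under-determined'' by freeing $?$-entries, since an equation may contain only $*$-entries, which must remain nonzero. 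The missing idea is to prescribe the null vector \emph{first} and then solve for the entries: set $y_j=1$ for all $v_j\notin D$ and $y_j=0$ on $D$; each column-$i$ equation then asks for admissible values of the entries in the white rows summing to zero, which is solvable in all three cases ($|W_i|=0$ trivially; $|W_i|=1$ with the lone $?$-entry set to $0$; $|W_i|\ge 2$ by balancing, e.g.\ choosing two entries $t$ and $-t$ with $t\neq 0$ and setting remaining $?$-entries to $0$), and these choices never conflict because each pattern entry of $\textbf{A}$ appears in exactly one column equation, while entries in rows of $D$ multiply zeros of $y$ and may be set arbitrarily. With this repair your contrapositive direction closes, and the rest of your plan --- applying the equivalence to both $(\textbf{A},\textbf{B})$ and $(\bar{\textbf{A}},\textbf{B})$ and invoking Theorem~\ref{theorem: 1} --- yields Theorem~\ref{theorem: 2} as the paper does.
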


\begin{proof}
    First, we prove that if the matrix $[\textbf{A} \ \textbf{B}]$ is full row rank for all admissible numerical realization, $\textbf{V}_l$ is the derived set of the graph $G(\textbf{A}, \textbf{B})$. Here, we set $\textbf{M} = [\textbf{A} \ \textbf{B}]$. By means of a finite sequence of elementary row operations, any matrix can be transformed to a row echelon form as follow:
    \begin{equation}
        \begin{bmatrix}
            a_{i_1, j_1} & \otimes & \cdots & \otimes \\
            0 & a_{i_2, j_2} & \cdots &  \otimes \\
            \vdots & \vdots & \vdots &\vdots \\
            0 & 0 & \cdots & b_{i_n, j_n}  \\
        \end{bmatrix},
    \end{equation}
where $a_{i_1, j_1}, a_{i_2, j_2}, \cdots, b_{i_n, j_n}$ is nonzero element and $\otimes$ is zero, nonzero or arbitrary element. Since $\textbf{M}$ is full rank, rows with all zero elements do not exist. We use $a_{i_1, j_1}, a_{i_2, j_2}, \cdots, b_{i_n, j_n}$ to represent the first left nonzero element in each row. The dimension of $i_1, i_2, \cdots, i_n$ is $N$. In $j_1$ column, only one nonzero element $a_{i_1,j_1}$ exists, which means that the node $v_{i_1}$ is the only white out-neighbour of node $v_{j_1}$. Therefore, node $v_{i_1}$ is colored black. Then, in the $j_2$ column, $v_{i_2}$ is the only white out-neighbour node of node $v_{j_2}$ because another out-neigbour of node $v_{j_2}$ is node $v_{i_1}$ which has been colored black. By repeating the above steps, all nodes will be colored black at last. $G(\textbf{A}, \textbf{B})$. This proves that if the matrix $[\textbf{A} \ \textbf{B}]$ is full row rank for all admissible numerical realization, ${\rm dset}(G(\textbf{A}, \textbf{B}), \textbf{V}_l) = \textbf{V}$. 

Now, we prove that only if the matrix $[\textbf{A} \ \textbf{B}]$ is full row rank for all admissible numerical realization, $\textbf{V}_l$ is the derived set of the graph $G(\textbf{A}, \textbf{B})$. Without loss of generality the matrix $\textbf{A}$ can be partitioned as 

\begin{equation}
    \textbf{A} = 
    \begin{bmatrix}
        A_{11} & A_{12} & A_{13} & A_{14} \\
        A_{21} & A_{22} & A_{23} & A_{24} \\
        A_{31} & A_{32} & A_{33} & A_{34} \\
        A_{41} & A_{42} & A_{43} & A_{44} 
    \end{bmatrix}
\end{equation}
where the diagonal blocks/elements $\textbf{A}_{11}, \textbf{A}_{22}, \textbf{A}_{33}, \textbf{A}_{44}$ represent the nodes in $\textbf{V}_L \setminus \{v_i\}$, the node $v_i$. the node $v_j$ and the remaining nodes, respectively. Suppose that $v_j$ is a node will be enforced to be black. Let $\xi = \{\xi_1, \xi_2, \xi_3, \xi_4 \}$ be a vector of $[\textbf{A} \ \textbf{B}]^{\bot}$, the orthogonal subspace of $[\textbf{A} \ \textbf{B}]$. Then we have $\xi[\textbf{A} \ \textbf{B}] = 0$, which equals to

\begin{equation}
[\xi_1 \ \xi_2 \ \xi_3 \ \xi_4]
    \begin{bmatrix}
       A_{11} & A_{12} & A_{13} & A_{14} &  \textbf{I}_{M-1} & 0 \\
       A_{21} & A_{22} & A_{23} & A_{24} & 0 & 1\\
       A_{31} & A_{32} & A_{33} & A_{34} & 0 & 0\\
       A_{41} & A_{42} & A_{43} & A_{44} & 0 & 0    
    \end{bmatrix}=0.
\end{equation}

Then we have $\xi_1=\xi_2=0$ and 
\begin{equation}
    [\xi_3 \ \xi_4] 
    \begin{bmatrix}
     A_{31} & A_{32} & A_{33} & A_{34} \\
     A_{41} & A_{42} & A_{43} & A_{44}    
    \end{bmatrix}=0
\end{equation}
There exist three cases that node $v_j$ can be enforced black. First, node $v_j$ is colored by node $v_i$ in $\textbf{V}_L$. Then $A_{32} \neq 0$ and $\xi_3$ must be $0$. In the second case, node $v_j$ is colored by itself, which indicates that $A_{33} \ne 0$ and $\xi_3$ must be $0$. In the third case, node $v_j$ is colored black by the remaining nodes in $A_{43}$. Then one element in $A_{34}$ is nonzero, which means that $\xi_3 = 0$. Therefore, in all these three cases, $\xi_3=0$. Then we have 
\begin{equation} \label{equ: 11}
    [\xi_4]
    \begin{bmatrix}
     A_{41} & A_{42} & A_{43} & A_{44}   
    \end{bmatrix}=0.
\end{equation}

Now, we consider add the node $v_j$ to $\textbf{V}_L$ to get $\textbf{V}'_L = \textbf{V}_L \cup {v_j}$. Let $\eta = \{\eta_1, \eta_2, \eta_3, \eta_4 \}$ be a vector of $[\textbf{A} \ \textbf{B}']^{\bot}$, where $\textbf{B}'$ is corresponding to $\textbf{V}'_L$. Then we have
\begin{equation} 
[\eta_1 \ \eta_2 \ \eta_3 \ \eta_4]
    \begin{bmatrix}
       A_{11} & A_{12} & A_{13} & A_{14} &  \textbf{I}_{M-1} & 0 & 0 \\
       A_{21} & A_{22} & A_{23} & A_{24} & 0 & 1 & 0\\
       A_{31} & A_{32} & A_{33} & A_{34} & 0 & 0 & 1\\
       A_{41} & A_{42} & A_{43} & A_{44} & 0 & 0 & 0  
    \end{bmatrix}=0,   
\end{equation}
which requires $\eta_1=\eta_2=\eta_3=0$ and 
\begin{equation} \label{equ: 13}
    [\eta_4] 
    \begin{bmatrix}
    A_{41} & A_{42} & A_{43} & A_{44}  
    \end{bmatrix} = 0.
\end{equation}
From equation \eqref{equ: 11} and \eqref{equ: 13}, we know that $[\textbf{A} \ \textbf{B}]$ has the same dimension of $[\textbf{A} \ \textbf{B}']$. If ${\rm dset}(G(\textbf{A}, \textbf{B}), \textbf{V}_l) = \textbf{V}$, by repeating the process of adding black nodes into $\textbf{V}_L$, $\textbf{V}'_L = \textbf{I}_N$ and $\eta_4 = 0$. $[\textbf{A} \ \textbf{B}']$ has full rank, which means that $[\textbf{A} \ \textbf{B}]$ has full rank. Hence, the only if part has been proved.

Therefore, we have proved that $[\textbf{A} \ \textbf{B}]$ has full rank for all admissible numerical realization if and only if $\textbf{V}_l$ is the zero forcing set of $G(\textbf{A}, \textbf{B})$. Combining this conclusion with Theorem~\ref{theorem: 1}, we can prove Theorem~\ref{theorem: 2}.
\end{proof}

\begin{theorem}\label{theorem: 3}(proved in \cite{trefois2015zero} (Theorem 5.5)) Let $G$ be a loop directed graph on $n$ vertices with pattern $\textbf{A}$ and $\textbf{V}_l$ be an input set with cardinality $m \le n$. System $(\textbf{A}, \textbf{B})$ is SSC if and only if 

\noindent 1) $\textbf{V}_l$ is a zero forcing set of $G$

\noindent 2) $\textbf{V}_l$ is a zero forcing set of $G^*$ for which there is a chronological list of forces that does not contain any force of the form $i \to i$ with $i \in \textbf{V}_{\rm loop}$, where $\textbf{V}_{\rm loop}$ is a set of nodes in the original graph with self-loop. $G^*$ denotes the graph obtained from graph $G$ by putting a loop on each node of $G$.    
\end{theorem}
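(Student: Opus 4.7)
The plan is to derive Theorem~\ref{theorem: 3} as a specialization of Theorem~\ref{theorem: 2}, since the zero/nonzero pattern is exactly the sub-case of the zero/nonzero/arbitrary setting in which $\textbf{E}_?=\emptyset$ in the original graph $G(\textbf{A},\textbf{B})$. Under this reduction, each of the two conditions of Theorem~\ref{theorem: 2} should translate cleanly into the corresponding condition of Theorem~\ref{theorem: 3}.

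First, when $\textbf{A}\in\{0,*\}^{N\times N}$, every non-fixed-zero edge of $G$ lies in $\textbf{E}_*$, so the color-change rule of Definition~\ref{def: color} coincides with the classical zero-forcing rule. Hence condition~1 of Theorem~\ref{theorem: 2}, namely ${\rm dset}(G(\textbf{A},\textbf{B}),\textbf{V}_l)=\textbf{V}$, is exactly the statement that $\textbf{V}_l$ is a zero forcing set of $G$, matching condition~1 of Theorem~\ref{theorem: 3}.

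For the second condition, I would identify $G(\bar{\textbf{A}},\textbf{B})$ with the loop-completed graph $G^*$. By equation~(\ref{equ: A}), $\bar A_{ii}=*$ when $A_{ii}=0$ (i.e.\ $i\notin\textbf{V}_{\rm loop}$) and $\bar A_{ii}=?$ when $A_{ii}=*$ (i.e.\ $i\in\textbf{V}_{\rm loop}$), while off-diagonal entries are unchanged. Thus both graphs carry a self-loop at every vertex and agree on all non-self edges; the sole structural difference is that self-loops at $i\in\textbf{V}_{\rm loop}$ belong to $\textbf{E}_?$ in $G(\bar{\textbf{A}},\textbf{B})$. Since Definition~\ref{def: color} only permits forcing across $\textbf{E}_*$-edges, a node in $\textbf{V}_{\rm loop}$ cannot force itself via its own self-loop under the derived-set procedure. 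Consequently ${\rm dset}(G(\bar{\textbf{A}},\textbf{B}),\textbf{V}_l)=\textbf{V}$ if and only if $\textbf{V}_l$ admits a chronological list of forces coloring all of $G^*$ that avoids every force $i\to i$ with $i\in\textbf{V}_{\rm loop}$, which is precisely condition~2 of Theorem~\ref{theorem: 3}.

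The main obstacle is the bookkeeping required to verify this chronological-list correspondence in both directions: any $\textbf{E}_*$-legal derivation on $G(\bar{\textbf{A}},\textbf{B})$ must be recast as a restricted chronological list on $G^*$, and conversely any such restricted list on $G^*$ must be re-interpretable as a valid derivation on $G(\bar{\textbf{A}},\textbf{B})$. Once this matching is made explicit, combining it with Theorem~\ref{theorem: 2} immediately delivers the equivalence stated in Theorem~\ref{theorem: 3}.
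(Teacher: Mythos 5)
Your proposal is correct and is essentially the paper's own argument: although the paper formally attributes Theorem~\ref{theorem: 3} to \cite{trefois2015zero}, the paragraph it places immediately after the statement establishes exactly your correspondence --- condition 1 reduces to classical zero forcing since $\textbf{E}_{?}=\emptyset$ in the zero/nonzero setting, and in $G(\bar{\textbf{A}},\textbf{B})$ the case $A_{ii}=0$ giving $\bar{A}_{ii}=*$ reproduces putting loops on loop-free nodes, while $A_{ii}=*$ giving $\bar{A}_{ii}=?$ makes the self-loop unusable for forcing, which is precisely the restriction that no force $i \to i$ with $i \in \textbf{V}_{\rm loop}$ appears in the chronological list on $G^*$. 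The only difference is the direction of inference: the paper takes Theorem~\ref{theorem: 3} as known and uses this equivalence to argue that Theorem~\ref{theorem: 2} remains valid for the zero/nonzero structure, whereas you run the same equivalence from Theorem~\ref{theorem: 2} to derive Theorem~\ref{theorem: 3}, which is non-circular here because the paper proves Theorem~\ref{theorem: 2} from Theorem~\ref{theorem: 1} alone.
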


Actually, Theorem~\ref{theorem: 2} is equivalent to Theorem~\ref{theorem: 3} if we only consider the zero/nonzero structure. Condition 1 in Theorem~\ref{theorem: 2} and Theorem~\ref{theorem: 3} are equivalent for zero/nonzero structure. For zero/nonzero structure, in condition 2 of Theorem~\ref{theorem: 2}, there exist 2 types of self-loop in the original graph $G$: $A_{ii}=0, *$ and three types of self-loop in the modified graph $G^*$: $A_{ii}=0, *, ?$.  If $A_{ii}=0$, $\bar{A}_{ii}=*$. This is the same with condition 2 in Theorem~\ref{theorem: 2} that add self-loop to nodes without self-loop in the original graph $G$. If $A_{ii}=*$, $\bar{A}_{ii}=?$. In this case, nodes with solid self-loop will have dashed self-loop which cannot force themselves black. This is equivalent to the condition 2 in Theorem~\ref{theorem: 3} that the chronological list of forces does not contain any force of the form $i \to i$ with $i \in \textbf{V}_{\rm loop}$. Therefore, Theorem~\ref{theorem: 2} is also valid for zero/nonzero structure. The color change rule is valid in both zero/nonzero/arbitrary and zero/nonzero structure.

\begin{proposition}\label{pro: 1}
    For any graph $G=(\textbf{V}, \textbf{E})$ with input nodes $\textbf{V}_l \subseteq \textbf{V}$, $|{\rm dset}(G, \textbf{V}_l)| = |{\rm dset}(G, {\rm dset}(G, \textbf{V}_l))|$, where $|{\rm dset}(G, \textbf{V}_l)|$ is the size of the derived set corresponding to the input set $\textbf{V}_l$.
\end{proposition}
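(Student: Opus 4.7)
The plan is to show the stronger set-level identity ${\rm dset}(G, {\rm dset}(G, \textbf{V}_l)) = {\rm dset}(G, \textbf{V}_l)$, from which the equality of cardinalities is immediate. The proposition is essentially an idempotence statement for the derived-set operator, and the intuition is that the color change rule runs until it gets stuck, so restarting the process from the already-black nodes cannot do any additional work.

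First I would unpack the definition. Let $W := {\rm dset}(G, \textbf{V}_l)$. By Definition 3, $W$ is obtained from the initial black set $\textbf{V}_l$ by exhaustively applying the color change rule of Definition~\ref{def: color} until no white node has a black in-neighbour whose sole remaining white out-neighbour it is. In particular, the termination criterion of the process says exactly that, given $W$ as the set of black nodes, no further white node can be forced black by the rule.

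Next I would use this termination property to evaluate ${\rm dset}(G, W)$. Taking $W$ itself as the input set and running the color change rule, the very first pass produces no new black node, because the stopping condition above is already met. Hence the procedure halts immediately with $W$ as its output, i.e.\ ${\rm dset}(G, W) = W$. Substituting $W = {\rm dset}(G, \textbf{V}_l)$ yields ${\rm dset}(G, {\rm dset}(G, \textbf{V}_l)) = {\rm dset}(G, \textbf{V}_l)$, and taking cardinalities on both sides gives the stated equality.

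I do not anticipate any genuine obstacle; the only subtlety worth flagging is that the color change procedure could in principle admit nondeterministic choices of which forcing move to apply next, so one should note (or appeal to the standard fact about zero forcing) that the final set of black nodes is independent of the order of forcing moves. Once this is acknowledged, the termination condition characterizes $W$ unambiguously and the idempotence argument above closes the proof.
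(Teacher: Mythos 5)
Your proposal is correct and rests on the same key idea as the paper's proof: the derived set $\mathrm{dset}(G, \textbf{V}_l)$ already satisfies the stopping condition of the color change rule, so restarting from it forces no new nodes. You phrase this directly as idempotence of the operator, while the paper reaches the identical conclusion by contradiction (assuming some $v_j$ lies in $\mathrm{dset}(G, \mathrm{dset}(G, \textbf{V}_l))$ but not in $\mathrm{dset}(G, \textbf{V}_l)$); your version is if anything cleaner, and your remark on order-independence of forcing addresses a subtlety the paper leaves implicit.
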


\begin{proof}
In the case ${\rm dset}(G, \textbf{V}_l) = \textbf{V}$, $|{\rm dset}(G, \textbf{V}_l)| = |{\rm dset}(G, {\rm dset}(G, \textbf{V}_l))|=N$. The proposition is obviously true in this case. 
In the case ${\rm dset}(G, \textbf{V}_l) \subsetneqq \textbf{V}$, it is always true that $|{\rm dset}(G, \textbf{V}_l)| \leqslant |{\rm dset}(G, {\rm dset}(G, \textbf{V}_l))|$. If the proposition is not true, then we assume that there exists a node $v_j$ satisfying $v_j \in {\rm dset}(G, {\rm dset}(G, \textbf{V}_l))$ but $v_j \notin {\rm dset}(G, \textbf{V}_l)$. ${\rm dset}(G, \textbf{V}_l))= {\rm dset}(G, {\rm dset}(G, \textbf{V}_l))/v_j$. Since no nodes can be forced to black except the derived set ${\rm dset}(G, \textbf{V}_l))$, it is obvious no other nodes can be forced to black except ${\rm dset}(G, {\rm dset}(G, \textbf{V}_l))/v_j$. Therefore, $v_j$ does not exist. This proves that $|{\rm dset}(G, \textbf{V}_l)| = |{\rm dset}(G, {\rm dset}(G, \textbf{V}_l))|$.
\end{proof}

Proposition~\ref{pro: 1} states that imposing control signals on nodes in the derived set cannot enforce nodes not belonging to the derived set to be black.  
According to the color change rule in Definition~\ref{def: color}, it is obvious that the color change process is related to the in-degree and out-degree of a node. For example, if node $v_i$'s in-degree is zero, it cannot be colored black by any other nodes in the graph. We need to add input to it to make it an input node. Therefore, it is reasonable to make all nodes with zero in-degree input node nodes. Here, we design a simple algorithm based on Proposition~\ref{pro: 1} and degree (shown in Fig.~\ref{fig: alg1}). 

\begin{figure}[htbp]
    \centering
    \resizebox*{8cm}{!}{\includegraphics{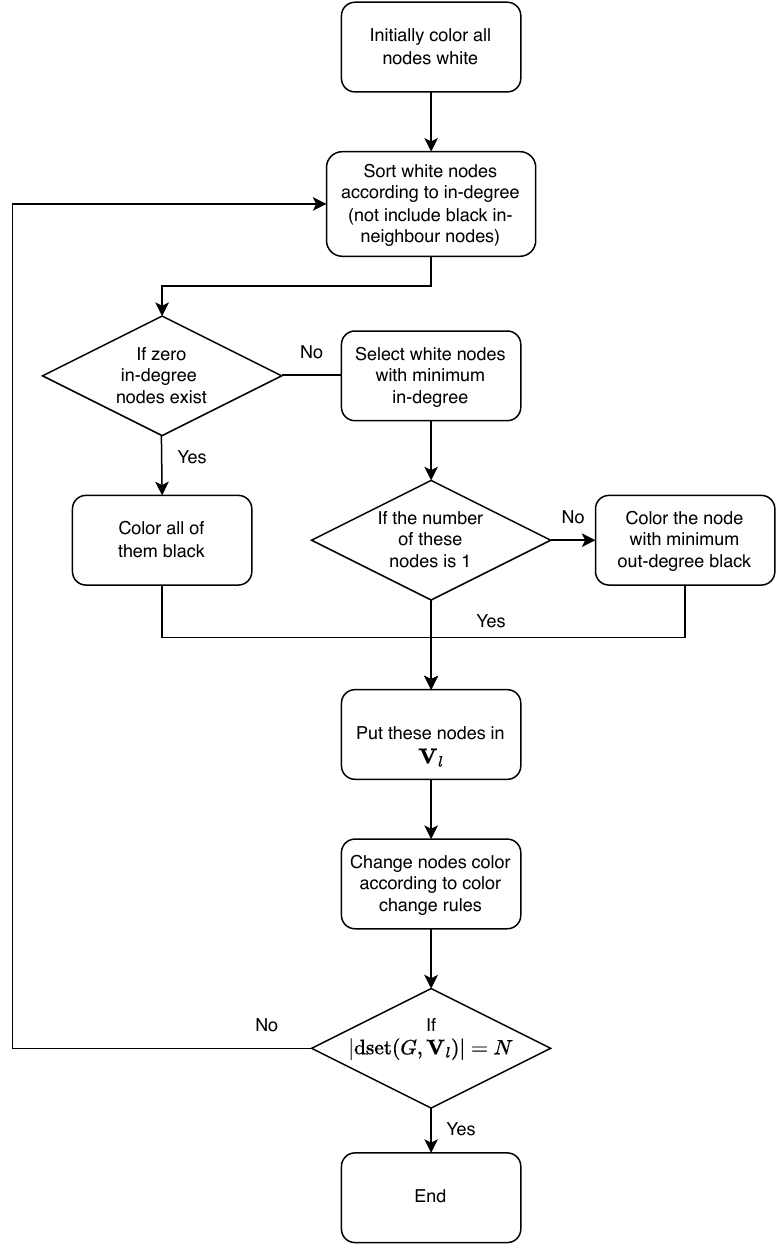}}
    \caption{This figure shows the algorithm based on Proposition~\ref{pro: 1} and degree.}
    \label{fig: alg1}
\end{figure}

\subsection{Color change process as a Markov decision process}
According to Definition~\ref{def: 4} and Theorem~\ref{theorem: 2}, finding the minimum number of input nodes for SSC is equivalent to finding the minimum number of input node nodes that can color all nodes in the graph black. We can formulate the color change process as an MDP to find the minimum number of input nodes for SSC. Since the color change rule is valid both in zero/nonzero and zero/nonzero/arbitrary structure, the MDP formulated according to the color change rule can also be applied in these two types of structures.

An MDP represents a possible way to formalize a decision-making process. Within this process, the decision maker is referred as the agent, and the surrounding in which it operates is called environment. When the agent is in a state $s\in S$, it selects a valid action $a \in A(s)$. Subsequently, the agent receives a reward $r$ based on the reward function $r(s, a)$, and transitions to a new state $s'$ according to the transition model $\mathcal{P}$ and its associated transition probability $P(s', a, s)$. The objective of the agent is to maximize the expected sum of rewards. This MDP is defined by the tuple $(S, A, \mathcal{P}, r, \gamma)$, where $\gamma$ represents the discount factor. The policy of agent is defined as $\pi_{\theta}$. The state value function of a given state $s$, based on the policy $\pi_{\theta}$, is denoted as $V^{\pi_{\theta}}(s)$.

Given an initial graph $G_0(\textbf{A}, \textbf{B}) = (\textbf{V}, \textbf{E})$, the initial set of input nodes $\textbf{V}_l = \varnothing$. The aim is to add the minimum number of nodes to $\textbf{V}_l$ to make all nodes black according to the color change rule in Definition~\ref{def: color}. Here, we cast the color change process as a sequential decision-making process and define the MDP as follows:

\noindent \textbf{State.} The fully observable state is the color of all nodes in $G_k$ at step $k$ denoted by $s_k$ depending on the derived set. We use $1$ to represent a black node and $0$ to represent a white node. For example, in Fig.~\ref{fig: modify_graph}, the derived set ${\rm dset}(G, \textbf{V}_l)=\{v_1, v_2\}$, $\textbf{V}_l={v_1}$. The state $s$ is denoted as $[1, 1, 0, 0, 0]$.

\noindent \textbf{Action.} The action is to add one input node to the graph at each step. The action space is the number of nodes in the graph. For a graph with $N$ nodes, the action space is $N$. To reduce the action space, the same action is not allowed. For example, node $v_1$ is selected as the input node node in the first action. Then node $v_1$ cannot be selected as the input node node again in the following steps. 

\noindent \textbf{State Transition.} After taking an action, $s_k$ transforms to $s_{k+1}$ according to the color change rule on the graph (shown in Fig~\ref{fig: transition}).

\noindent \textbf{Reward.} The reward $r_k$ depends on the size of derived set $|{\rm dset}(G, \textbf{V}_l)|$

\begin{equation}
 r_k = \left\{
\begin{aligned}
 & 100,\quad |{\rm dset}(G, \textbf{V}_l) |=|{\rm dset}(G^*, \textbf{V}_l) |=N, \\ 
 & -1, \quad {\rm Otherwise}.
\end{aligned}
\right.
\end{equation}

In MDP, return $R_k$ is defined as the sum of rewards from step $k$ to the end of the process. Return $R_k$ is calculated by
\begin{equation}
    R_k = r_k + \gamma r_{k+1} + \gamma^2 r_{k+2} + ... = \sum_{j=0}^{\infty}\gamma^j r_{k+j},
\end{equation}
where $\gamma$ is the decay factor.

\begin{figure*}[ht]
    \centering
    \resizebox*{16cm}{!}{\includegraphics{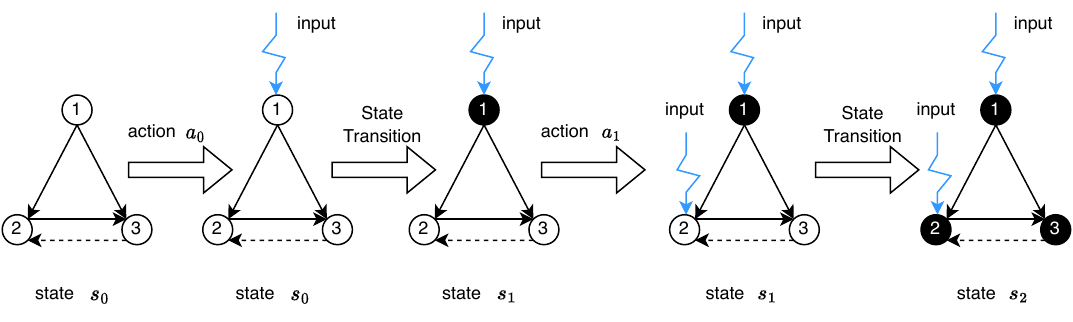}}
    \caption{This figure shows the transition of states according to the color change rule. At state $s_0$, according to the color change rule, in graph $G_0$, node $2$ and $3$ are black, but all nodes are white in $G_0^*$. Therefore, all nodes are white in state $s_0$. Then we take action $a_0$ to impose input on node $1$. All nodes in graph $G_1$ are black, but only node $1$ is black in $G_1^*$. So only node $1$ is black in state $s_1$. State $s_0$ transits to $s_1$ after taking action $a_0$. Then we take action $a_1$ to impose input on node $2$. State $s_1$ transits to state $s_2$ in which all nodes are black. In state $s_2$, the system is SSC and the MDP ends.} 
    \label{fig: transition}
\end{figure*}

\begin{figure*}[htbp]
    \centering
    \resizebox*{16cm}{!}{\includegraphics{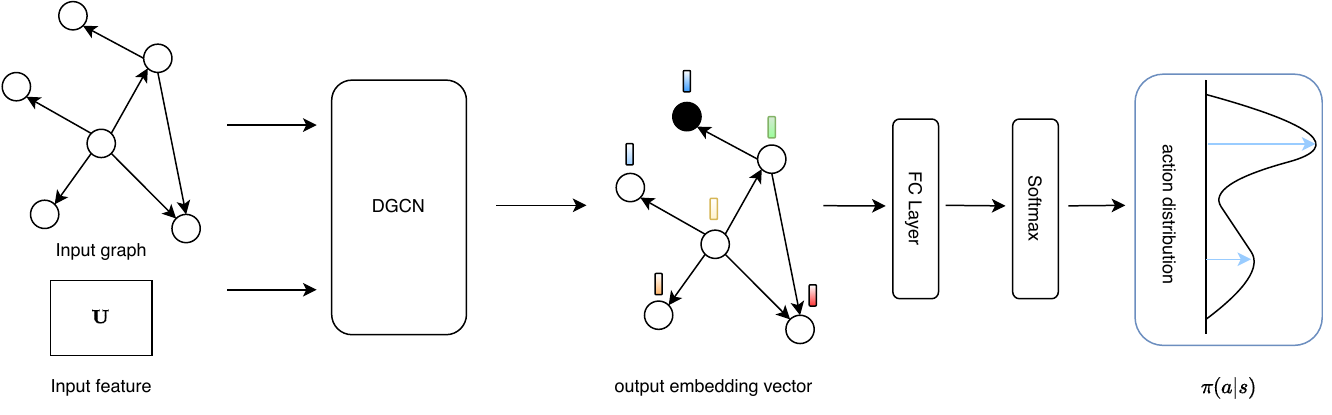}}
    \caption{This figure shows how the actor net generates actions. We input graph and feature vector of each node to DGCN to output embedding vector of each node. The embedding vectors are put into full connect layers and softmax function to get the action distribution.}
    \label{fig: DGCN}
\end{figure*}

\subsection{Represent graph information by directed graph neural network}
While the minimum input problem formulation as a MDP may allow us to solve it in reinforcement learning method, the number of states and the size of the action space  become intractable in a large graph. Besides,
in each step, the agent will take an action which adds a input node node to the graph, and then some white nodes will be forced to black to form a new state according to the color change rule. The action and the state are closely related to the structure of the graph. It is difficult for neural networks used in reinforcement learning methods like deep Q learning, actor critic network, to deal with graph structure data. Thus, we need a method which can represent the graph structure and generalize across similar states as well as actions. Graph neural network can be considered to solve this problem. Since the graph is directed, the directed graph neural network is employed here. Given an input graph $G=(\textbf{V}, \textbf{E})$ where nodes $v_i \in \textbf{V}$ have feature vectors $\textbf{u}_i$, its objective is to produce for each node $v_i$ an embedding vector $\textbf{z}_i$ that captures the structure of the directed graph and interactions between neighbours. Here, we employ the spectral-based GCN model $f(\textbf{U}, \textbf{A})$ for directed graphs that leverage the First- and Second- Order Proximity, called DGCN \cite{tong2020directed}. The multi-layer Graph Convolutional Network for directed graphs has the following layer-wise propagation rule:
\begin{equation}
    \textbf{H}^{l+1}= \Gamma(\textbf{H}^{l}, \textbf{A}),
\end{equation}
where $\Gamma$ is a fusion function such as normalization functions, summation functions and concatenation. $\textbf{H}^{l}$ is the matrix of activation in the $l^{th}$ layer and $\textbf{H}^{(0)} = \textbf{U}$, where $\textbf{U}=(\textbf{u}_1, \textbf{u}_2, \cdots, \textbf{u}_N)$. 

The action is predicted by a DGCN. Input the graph $G(\textbf{V}, \textbf{E})$ and feature vectors $\textbf{U}$ of nodes into the DGCN to produce embedding vectors for all nodes. Then input embedding vectors to FC Layer and Softmax function to output the probability of action distribution (shown in Fig.~(\ref{fig: DGCN}).

The value of the state $V^{\pi_{\theta}}(s_k)$ is predicted by a DGCN. Policy gradient expression is defined as:
\begin{equation}
    \nabla J(\theta)=\mathbb{E}[\sum_{k=0}^K\psi_k \nabla_{\theta} \log \pi_{\theta}(a_k|s_k)].
\end{equation}
$\psi_k$ has a lot of forms to choose, and temporal difference (TD) error is used in this paper as follow:
\begin{equation}
  \psi_k = r_k + \gamma V^{\pi_{\theta}}(s_{k+1}) - V^{\pi_{\theta}}(s_{k}).   
\end{equation}
The TD error is used to update parameters of actor net and critic net. Parameters $\theta$ in actor network are updated by $\theta=\theta+\alpha_{\theta} \nabla J(\theta)$ and parameters $\omega$ in critic network are updated by $\omega = \omega+\alpha_{\omega} \nabla J(\omega)$.

\begin{algorithm}[ht]\label{algorithm 2}
\caption{Actor-Critic method with Directed Graph Convolutional Network}
\begin{algorithmic}[1]
    \STATE Initialize parameters $\theta$ in actor net, parameters $\omega$ in critic net, learning rate $\alpha_{\theta},  \alpha_{\omega}$;
    \FOR{episode $i=1, I$}
    \STATE Input the initial graph $G_0$ to get the initial state $s_0$ according to the color change rules in \ref{def: color};
    \STATE Obtain the feature vector $\textbf{u}_i$ according to the state $s_0$ for each node;
    \WHILE{step $k=0, K$}
    \STATE Input feature vectors and the graph to critic net to estimate the value of current state $r_k$;
    \STATE Input feature vectors and the graph to actor net to generate an action $a_k$ by policy $\pi_{\theta}$;
    \STATE State transits from $s_k$ to the next state $s_{k+1}$ after taking action $a_k$;
    \STATE Input feature vectors $\textbf{u}'_i$ and state $s_{k+1}$ to critic net to estimate the value of current state $r_{k+1}$;
    \ENDWHILE
    \STATE Sample trajectory $\{s_0, a_0, r_0, s_1, a_1, r_1,\ldots, s_K, r_K\}$
    \STATE Compute TD error $\delta_k = r_k + \gamma V^{\pi_{\theta}}(s_{k+1}) - V^{\pi_{\theta}}(s_{k})$
    \STATE Update parameters of critic net by $\omega = \omega+\alpha_{\omega} \nabla J(\omega)$
    \STATE Update parameters of actor net  by
    $\theta=\theta+\alpha_{\theta} \nabla J(\theta)$ 
    \ENDFOR
\end{algorithmic}
\end{algorithm}

\section{Results}
\subsection{A simple graph}
Here, we apply our method to a simple case to show how it works. The graph $G$ and the modified graph $G^*$ are shown in Fig.~\ref{fig: case1}.  In initial graph $G$, the derived set is $\{ v_0, v_2, v_3, v_4, v_5, v_7\}$. In the initial graph $G^*$, the derived set is $\varnothing$. ${\rm dset
}(G, \varnothing) \cap {\rm dset
}(G^*, \varnothing)=\varnothing$. No node can be colored black without input in the initial graph. The initial state $s_0=[0, 0, 0, 0, 0, 0, 0, 0, 0, 0]$. According to the reinforcement learning method in this paper, the first action is to impose control signals on node $v_1$ and $\textbf{V}_l = \{v_1\}$. ${\rm dset
}(G, \textbf{V}_l) = \{v_0, v_1, v_2, v_4\}$. ${\rm dset
}(G^*, \textbf{V}_l) = \{v_0, v_1, v_2, v_3, v_4, v_5, v_6, v_7, v_8\}$. We can get ${\rm dset
}(G, \textbf{V}_l) \cap {\rm dset
}(G^*, \textbf{V}_l)=\{v_0, v_1, v_2, v_4\}$. State $s_1=[1, 1, 0, 1, 0, 0, 0, 0, 0, 0]$. The second action is $v_6$. ${\rm dset
}(G, \textbf{V}_l) \cap {\rm dset
}(G^*, \textbf{V}_l)=\{v_0, v_1, v_2, v_4, v_6\}$. State $s_2=[1, 1, 0, 1, 0, 1, 0, 0, 0, 0]$. The third action is $v_9$. ${\rm dset
}(G, \textbf{V}_l) \cap {\rm dset
}(G^*, \textbf{V}_l)=\{v_0, v_1, v_2, v_3, v_4, v_5, v_6, v_7, v_8, v_9\}$. The state $s_3=[1, 1, 1, 1, 1, 1, 1, 1, 1, 1]$. All nodes are colored black after these three actions. $\textbf{V}_l=\{v_1, v_6, v_9\}$ is a zero forcing set and $Z(G)=3$. Then we apply the algorithm based on degree designed in this paper. According to the algorithm based on degree, the input set is $\textbf{V}_l=\{v_1, v_6, v_8, v_9\}$. These two methods are valid to find the zero forcing set and the reinforcement learning method has a better result.

\begin{figure}[ht]
    \centering
    \resizebox{8cm}{!}{\includegraphics{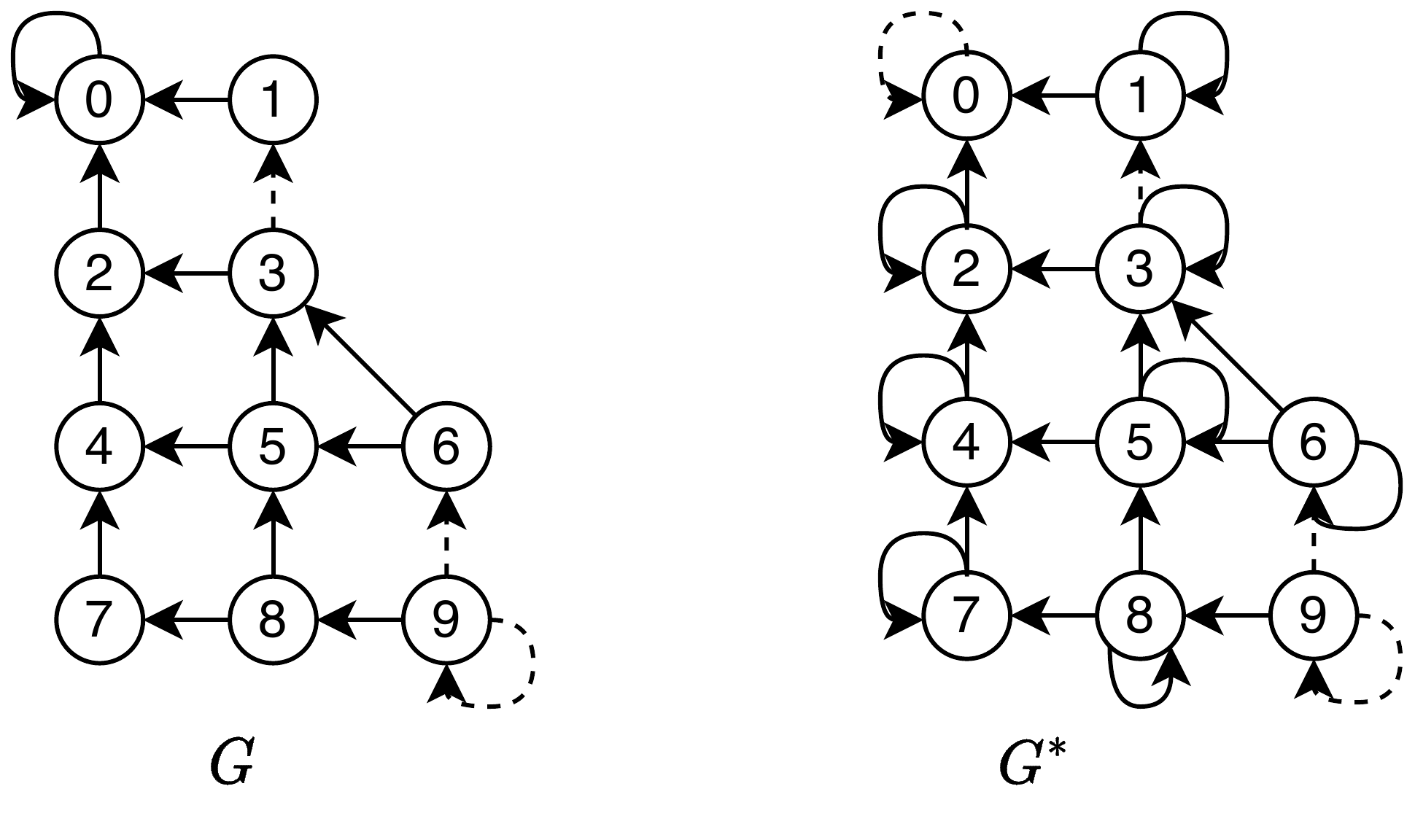}}
    \caption{A simple case, graph $G$ and 
 its modified graph $G^*$.}
    \label{fig: case1}
\end{figure}

\subsection{Influence social networks}
In this case, we employ our method to an influence social network. In social networks, according to French's formal theory \cite{french1956formal}, the influence effect in social networks is determined to be proportional to the size of the difference between opinions $g(x_i(t), x_j(t)) = a_{ji}(x_j(t)-x_i(t))$, where $a_{ij}$ is the strength of the effect. Here, we also assume that the self-dynamics function $f(\cdot)$ is linear as $f(x_i(t)) = a_{ii}x_i(t)$. Then, influence on node $i$ can be described as:
\begin{equation}
    \frac{dx_i(t)}{dt}=a_{ii}x_i(t) + \sum_{j=1, j \ne i}^N a_{ij}(x_j(t)-x_i(t)).
\end{equation}
Then, the system can be written as a matrix form 
\begin{equation}
    \dot{\textbf{x}}(t) = \textbf{W}^{\top}\textbf{x}(t),
\end{equation}
where $\textbf{W} \in \mathbb{R}^{N \times N}$ is a matrix of $W_{ij}$. $W_{ii}=a_{ii}+\sum_{j=1, j \ne i}^Na_{ij}, W_{ij}=a_{ij}$. Considering the control imposed on the system, the dynamics can be described as 
\begin{equation}\label{equ:weight}
    \dot{\textbf{x}}(t) = \textbf{W}^{\top}\textbf{x}(t) + \textbf{B}\textbf{u}(t).
\end{equation}
Obviously, the parameter of $x_i(t)$ is $a_{ii}+\sum_{j=1, j \ne i}^Na_{ij}$ which partly depends on $a_{ij}$. Parameters of $x_i(t)$, $a_{ii}+\sum_{j=1, j \ne i}^Na_{ij}$, can be zero, nonzero or arbitrary. Therefore, these social network is zero/nonzero/arbitrary structure. In a directed graph, we use $k_i^{\rm in}, k_i^{\rm out}, k_i$ to represent the in-degree, out-degree and degree of node $v_i$ respectively. $k_i = k_i^{\rm in}+k_i^{\rm out}$. $k^{\rm in}_i$ is the number of in edge $e_{j,i} \in \{\textbf{E}_*, \textbf{E}_?\}$ of node $v_i$. $k^{\rm out}_i$ is the number of in edge $e_{i,j} \in \{\textbf{E}_*, \textbf{E}_?\}$ of node $v_i$. Here, we use the Twitter dataset with Feminism topics. This social network consists of 180 nodes and about 1000 edges. 

By using our method, the minimum number of zero forcing set $Z(G)$ is $83$. The in-degree distribution, out-degree distribution, and degree distribution of nodes in the graph and the zero forcing set are shown in Fig~\ref{fig: degree distribution}. The average degree, in-degree, and out-degree of nodes in the graph and zero forcing set are shown in Table~\ref{tab: degree}. It is obvious that zero forcing set tends to select low in-degree nodes to be input nodes and avoid nodes with high in-degree, out-degree, and degree. Compared with out-degree, in-degree is a more important factor for nodes to be selected as potential input nodes. Note that there are no nodes with $0$ in-degree. The lowest in-degree is $1$. If nodes with $0$ in-degree exist in the graph, all of them need to be imposed control signals because other nodes cannot enforce nodes with $0$ in-degree to be black. In the first subfigure of Fig~\ref{fig: degree distribution}., nodes with lowest in-degree are more than $90 \%$ in zero forcing set. This result identifies that the nodes with the lowest in-degree are critical to SSC of the influence social network. This is quite different from the previous understanding of influence in social networks where nodes with high degree as hubs play important roles. This is because the high-degree nodes usually have high in-degree and out-degree. Imposing control signals on a node with high degree may only control this node itself. Therefore, selecting nodes with high degree is not efficient for SSC. 

\begin{figure*}[ht]
    \centering
    \resizebox{16cm}{!}{\includegraphics{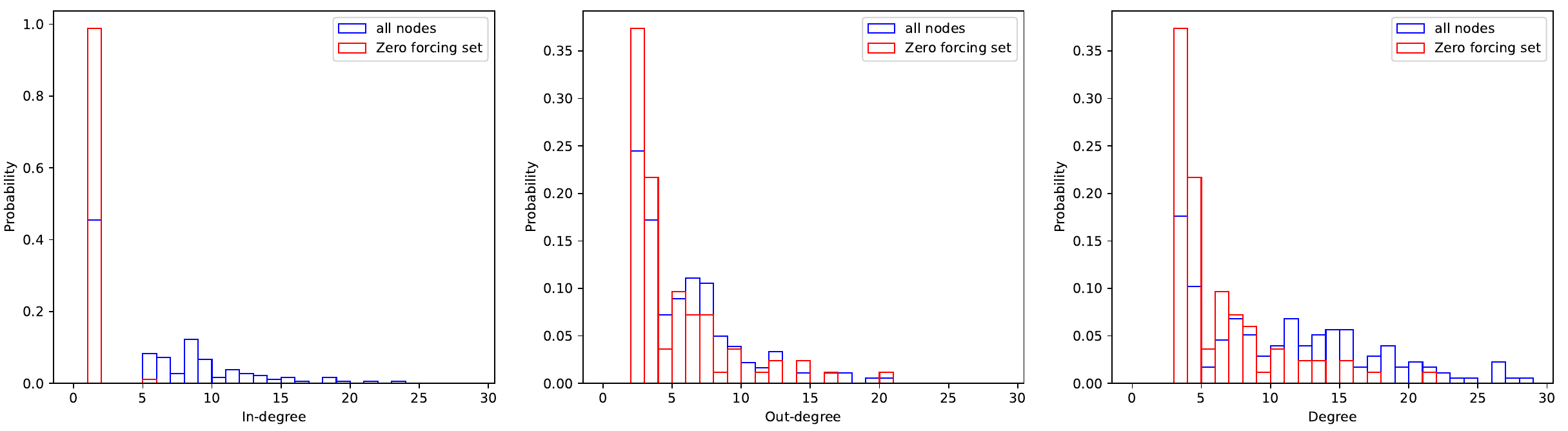}}
    \caption{This figure shows the in-degree distribution, out-degree distribution, and degree distribution of the graph and zero forcing set.}
    \label{fig: degree distribution}
\end{figure*}

\begin{figure*}[ht]
    \centering
    \resizebox{16cm}{!}{\includegraphics{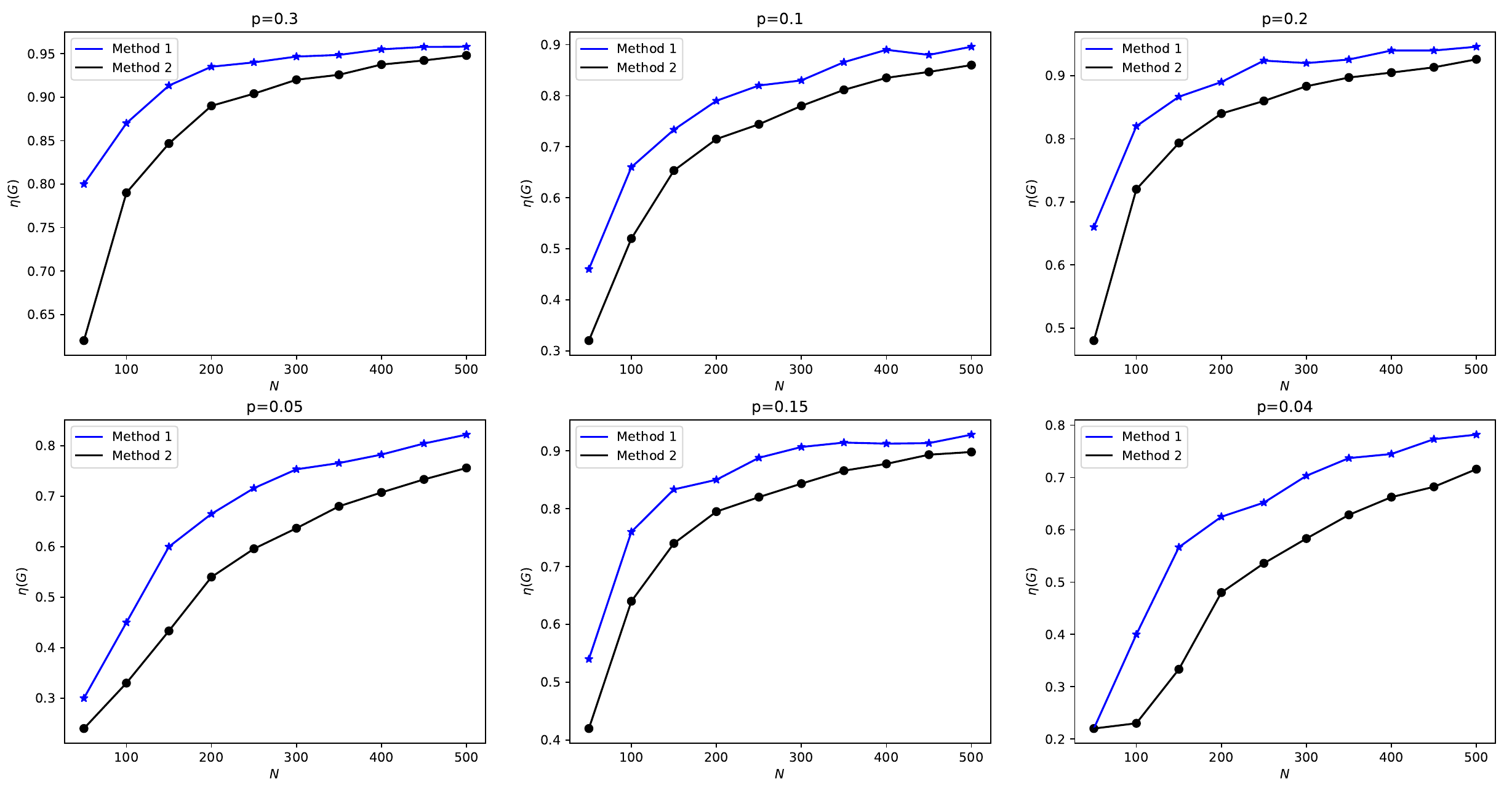}}
    \caption{Comparing reinforcement learning method (Method 2) in this paper and the algorithm based on degree (Method 1) in different Erdos Renyi random graphs. $N$ is the number of nodes in a graph. $\eta(G)$ is the ratio of input nodes for SSC and total nodes in the graph.}
    \label{fig: compare}
\end{figure*}

\begin{figure}[htbp]
    \centering
    \resizebox{8cm}{!}{\includegraphics{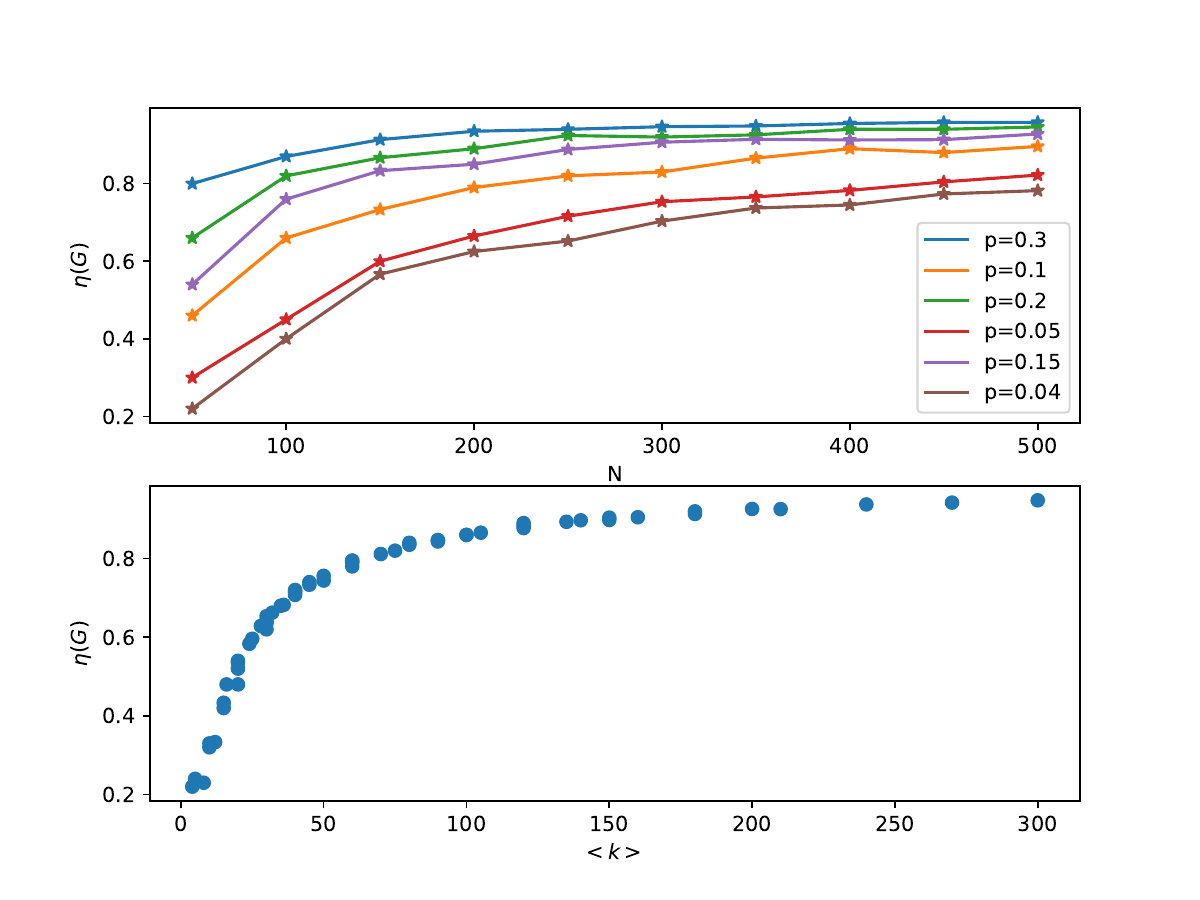}}
    \caption{The first subfigure shows $\eta(G)$ in different ER random graphs. The second subgraph shows the relationship between $\eta(G)$ and the average degree $<k>$ of graphs.}
    \label{fig: random}
\end{figure}

\begin{table}[]\label{tab: degree}
\caption{Properties of zero forcing set and the whole graph}
\begin{tabular}{|l|l|l|l|l|}
\hline & \begin{tabular}[c]{@{}l@{}}Number\\ of nodes\end{tabular} & \begin{tabular}[c]{@{}l@{}}Average\\ degree\end{tabular} & \begin{tabular}[c]{@{}l@{}}Average\\ in-degree\end{tabular} & \begin{tabular}[c]{@{}l@{}}Average\\ out-degree\end{tabular} \\ \hline
All nodes     & 180     & 10.88        & 5.44      & 5.44     \\ \hline
Zero forcing set & 83   & 5.63      & 1.05     & 4.58    \\ \hline
\end{tabular}
\end{table}

\subsection{Different random network models}
Here, we apply our method in different random network models to optimize the minimum control input $Z(G)$. The ratio of input nodes for SSC and total nodes in the graph is calculated by 
\begin{equation}
    \eta(G) = Z(G)/N
\end{equation}

We compare our reinforcement learning method with the algorithm based on degree (in Fig.~\ref{fig: alg1}) on different Erdos Renyi (ER) random graphs \cite{erdds1959random} with $N=50, 100, 150, 200, 250, 300, 350, 400, 450, 500$ and $p=0.04, 0.05, 0.1, 0.15, 0.2, 0.3$. $N$ is the number of nodes in the graph and $p$ is the probability of an edge exists between two nodes. The results are shown in Fig.~\ref{fig: compare}. The reinforcement learning method always has better results than the algorithm based on degree, except on the graph with $N=50, p=0.04$, where these two methods have the same results. In the first subfigure of Fig.~\ref{fig: random}, $\eta(G)$ increases with $N$ of ER graphs with the same connection probability $p$. $\eta(G)$ also increases with connection probability $p$ of ER graphs with the same $N$. $\eta(G)$ increases fast from $<k>=0$ to $<k>=50$ and approaches 1 at last which means that in a graph with high average degree, we need to control almost all nodes for SSC. That is to say, in a sparse graph, to control all nodes, we only need to control a small part of nodes with low in degree, but in a dense graph, we need to control almost all nodes to fully control the system. Here, we briefly explain why the average degree $<k>$ affects $\eta(G)$. For example, a node $v_i$ has $N$ out-neighbour nodes. To control all out-neighbour nodes of $v_i$, we need to control $N-1$ of them because node $v_i$ can not force one white out-neighbour node. In this situation, $\eta(G) = \frac{N}{N+1}$, which increase with $N$. Usually, the graph has a more complex structure, and $\eta(G)$ may be determined by many factors together e.g. short distance, clusters, betweenness, etc.

\section{Conclusions} \label{sec:conclusions}
This paper introduces an innovative approach designed to minimize the number of control signals imposed on nodes, aiming to make the networked system strong structure controllable (SSC) for both zero/nonzero and zero/nonzero/arbitrary structure. Our primary achievement lies in the establishment of the condition for SSC, incorporating a specific color change rule, and converting the SSC problem into a graph coloring problem. Based on the color change rule and degree distribution of the graph, we designed a simple algorithm to minimize the number of input nodes for SSC. Subsequently, we modeled the color change process as a Markov decision process (MDP) and proposed the utilization of a reinforcement learning methodology, incorporating directed graph neural networks which encapsulate the graph information. This approach optimize the MDP, with the aim of minimizing the number of input nodes. To validate our method, we applied it to a social influence network, revealing that the control signals consistently avoid the high-degree nodes but select nodes with low in-degree as input nodes. Additionally, we compared our method with a degree-based algorithm, which we designed, in several Erdos Renyi (ER) random graphs. In every case, the reinforcement learning method presented in this paper outperformed the algorithm based on degree distribution. Simultaneously, the number of input nodes increases with the average degree of ER random graph. A general observation is that the sparse networks are easier to fully control for any admissible numerical realizations with zero/nonzero/arbitrary or zero/nonzero structure than the dense networks. As a future direction, we plan to explore the analytic relationship between the minimum number of input nodes and network topology, considering factors such as betweenness, clusters and page links.

\ifCLASSOPTIONcaptionsoff
  \newpage
\fi

\bibliographystyle{IEEEtran}
\bibliography{ref}

\vfill


\end{document}